\documentclass[twoside,leqno,twocolumn]{article}

\usepackage[letterpaper]{geometry}

\usepackage{siamproceedings}

\usepackage{amssymb}
\usepackage{subcaption}
\usepackage{multirow}
\usepackage{booktabs} 
\usepackage{xcolor}

\usepackage[T1]{fontenc}
\usepackage{amsfonts}
\usepackage{graphicx}
\usepackage{epstopdf}
\usepackage{enumitem}
\usepackage{algorithmic}
\ifpdf
  \DeclareGraphicsExtensions{.eps,.pdf,.png,.jpg}
\else
  \DeclareGraphicsExtensions{.eps}
\fi

\newsiamremark{remark}{Remark}
\newsiamremark{hypothesis}{Hypothesis}
\crefname{hypothesis}{Hypothesis}{Hypotheses}
\newsiamthm{claim}{Claim}

\usepackage{amsopn}

\begin{document}

\newcommand\relatedversion{}
\renewcommand\relatedversion{\thanks{The full version of the paper can be accessed at \protect\url{https://arxiv.org/abs/0000.00000}}} 

\title{Multi-Scale Reversible Chaos Game Representation: A Unified Framework for Sequence Classification}
    \author{Sarwan Ali \thanks{Columbia University, NY, USA \email{(sa4559@cumc.columbia.edu)}.} 
    \and Taslim Murad \thanks{Department of Computer Science, Institute of Business Administration Karachi, Pakistan
  (\email{tmurad@iba.edu.pk)}.}}

\date{}

\maketitle


\fancyfoot[R]{\scriptsize{Copyright \textcopyright\ 20XX by SIAM\\
Unauthorized reproduction of this article is prohibited}}





\begin{abstract} Biological classification with interpretability remains a challenging task. For this, we introduce a novel encoding framework, Multi-Scale Reversible Chaos Game Representation (MS-RCGR), that transforms biological sequences into multi-resolution geometric representations with guaranteed reversibility.
Unlike traditional sequence encoding methods, MS-RCGR employs rational arithmetic and hierarchical k-mer decomposition to generate scale-invariant features that preserve complete sequence information while enabling diverse analytical approaches.
Our framework bridges three distinct paradigms for sequence analysis: (1) traditional machine learning using extracted geometric features, (2) computer vision models operating on CGR-generated images, and (3) hybrid approaches combining protein language model embeddings with CGR features. Through comprehensive experiments on synthetic DNA and protein datasets encompassing seven distinct sequence classes, we demonstrate that MS-RCGR features consistently enhance classification performance across all paradigms. Notably, our hybrid approach combining pre-trained language model embeddings (ESM2, ProtT5) with MS-RCGR features achieves superior performance compared to either method alone.
The reversibility property of our encoding ensures no information loss during transformation, while multi-scale analysis captures patterns ranging from individual nucleotides to complex motif structures. 
Our results indicate that MS-RCGR provides a flexible, interpretable, and high-performing foundation for biological sequence analysis.
\end{abstract}

\section{Introduction.} \label{sec:Intro}
Biological sequence classification underpins a wide range of tasks in computational
biology, including functional annotation of proteins, taxonomic assignment of metagenomic
reads, and detection of repetitive genomic elements~\cite{lecun2015deep,jumper2021alphafold2}.
The central challenge is to construct representations that are simultaneously
\emph{information-preserving}, \emph{computationally tractable}, and
\emph{discriminative} across biologically meaningful sequence classes.

Classical and combinatorial approaches often represent biological sequences using engineered feature spaces, such as k-mer frequency profiles~\cite{leslie2002spectrum,rangwala2005profile} or features derived from sequence alignment~\cite{altschul1997gapped}. These representations enable the application of standard machine learning classifiers. However, k-mer–based methods typically ignore positional dependencies between symbols and produce high-dimensional feature spaces whose size grows exponentially with k, leading to scalability challenges~\cite{zhang2025biological}. While alignment-based representations preserve sequence order, they are computationally expensive and difficult to scale to large datasets~\cite{wood2014kraken,zielezinski2017alignment}.


An
orthogonal line of work employs the Chaos Game Representation
(CGR)~\cite{jeffrey1990chaos}, which embeds a sequence as a fractal point cloud in the
unit square, thereby encoding both composition and long-range order in a compact spatial
format.  However, standard CGR is restricted to four-symbol alphabets, is sensitive to
floating-point accumulation, and is not trivially reversible over large protein alphabets.

More recently, pre-trained protein language models (pLMs) such as ESM-2~\cite{lin2023esm2},
ProtTrans~\cite{elnaggar2021prottrans}, and ProteinBERT~\cite{brandes2022proteinbert}
have demonstrated state-of-the-art performance on numerous protein function and structure
prediction benchmarks by learning evolutionary context from hundreds of millions of
sequences.  Yet pLM embeddings alone encode global evolutionary context and may miss
fine-grained local compositional signals that are critical for classifying sequences by
physicochemical class or nucleotide composition.

In this work we propose a \emph{Multi-Scale Reversible Chaos Game Representation}
(MS-RCGR) that addresses the limitations of prior CGR methods: it handles arbitrary
alphabets via a rational-arithmetic corner-point scheme, guarantees exact sequence
reconstruction, and captures multi-resolution compositional patterns through $k$-mer
streams at scales $\mathcal{K}=\{1,2,3,4\}$.  We then fuse the resulting compact
structural descriptors with mean-pooled ESM-2 embeddings and evaluate all combinations
on a 7-class synthetic benchmark spanning four DNA composition classes and three protein
physicochemical classes.

Our main findings are: (1) pLM embeddings alone (SVM-ESM2, 98.71\% accuracy) substantially
outperform both $k$-mer baselines (LR-kmer, 92.86\%) and CGR image-based deep learning
pipelines (custom CNN, 63.71\%), confirming that pre-trained evolutionary context is the
dominant discriminative signal; (2) CGR features provide complementary geometric
information that, when fused with ESM-2 embeddings, maintains competitive accuracy
(SVM-ESM2+CGR, 98.94\%); and (3) the MS-RCGR encoding is provably reversible, making it
applicable wherever lossless representation is required.

\textbf{Contributions.}
\begin{itemize}
  \item A formal multi-scale extension of CGR to arbitrary alphabets with rational-arithmetic
        precision guarantees and a proof of exact reconstruction
        (Theorem~\ref{thm:reconstruct}).
  \item A systematic comparison of four representation paradigms ($k$-mer features, CGR
        structural features, CGR image-based deep learning, and pLM embeddings) under a
        unified experimental protocol.
  \item Empirical evidence that additive fusion of compact CGR descriptors with pLM
        embeddings preserves accuracy while reducing dependence on model size.
\end{itemize}

\section{Related Work.} \label{sec:RelatedWork}
\paragraph{Chaos Game Representation.}
CGR was introduced by Jeffrey~\cite{jeffrey1990chaos} as a visualisation tool for DNA
sequences and later formalised as a bijective mapping from sequences to the unit square
by Almeida et al.~\cite{almeida2001analytical}, who showed that the CGR image of a
sequence of length $n$ encodes the complete $k$-mer frequency spectrum for all $k\le n$.
Variant approaches include the Frequency Matrix CGR
(FCGR)~\cite{deschavanne1999genomic}, which bins the CGR point cloud into a
$2^k\times 2^k$ density matrix equivalent to the $k$-mer profile, and multifractal
CGR~\cite{yu2010multifractal}, which extracts fractal-dimension features for phylogenetic
analysis.  Our MS-RCGR generalises these constructions to $m$-symbol alphabets via
rational arithmetic and adds a multi-scale $k$-mer stream formulation not present in
prior work.

\paragraph{$k$-mer and alignment-based sequence features.}
The spectrum kernel of Leslie et al.~\cite{leslie2002spectrum} defines an inner product
over $k$-mer occurrence counts and achieves competitive results on remote homology
detection.  Profile-based features derived from PSI-BLAST position-specific scoring
matrices~\cite{rangwala2005profile} encode evolutionary information in a fixed-length
vector and remain strong baselines for protein classification.  Unlike these methods,
which are purely compositional, CGR-derived features additionally encode positional
ordering through the geometry of the fractal trajectory.

\paragraph{Deep learning on sequence images.}
Convolutional neural networks applied directly to CGR images have been explored for
genome classification~\cite{lochel2020deep},
with reported accuracies in the range of 75–92\% on curated benchmarks.   
Our
experiments confirm that such vision pipelines underperform on the present benchmark
(best 63.71\%), likely because the sequences are too short ($\leq$201 nt / 150 aa) to
produce discriminative fractal structure in a $256\times256$ image.

\paragraph{Protein language models.}
Transformer-based pLMs pre-trained on large sequence databases have rapidly become the
dominant paradigm for protein representation learning.  ESM-1b and ESM-2~\cite{lin2023esm2}
are trained with a masked language modelling objective on UniRef50 and achieve
state-of-the-art results on structure prediction, mutation effect estimation, and
functional classification.  ProtTrans~\cite{elnaggar2021prottrans} trains T5 and BERT
variants on the BFD/UniRef databases and produces per-residue embeddings used across
secondary structure, localisation, and homology tasks.  ProteinBERT~\cite{brandes2022proteinbert}
combines a BERT-style encoder with a global annotation track for joint sequence-function
pre-training.  Our work differs from these in that we use pLM embeddings as a
\emph{fixed feature extractor} rather than fine-tuning, and we study their interaction
with explicit geometric CGR features.

\paragraph{Multimodal and hybrid sequence representations.}
Hybrid approaches combining handcrafted features with learned embeddings have been
proposed for nucleotide-level tasks: iLearn~\cite{chen2020ilearn} and
iFeature~\cite{chen2018ifeature} provide unified pipelines for $k$-mer, pseudo-amino-acid
composition, and structural features.  BioSeq-Analysis~\cite{liu2019bioseq} extends this
to RNA and DNA with ensemble classifiers.   
Our contribution advances this direction by (i)
using modern large-scale pLMs, (ii) providing a reversibility guarantee, and (iii)
conducting a controlled ablation of each representation component.

\section{Proposed Approach.} \label{sec:PropApp}
We present a multi-phase framework for biological sequence classification that combines a
novel \emph{Multi-Scale Reversible Chaos Game Representation} (MS-RCGR) with protein
language model embeddings.  The framework operates in three stages: (1) rational-arithmetic
CGR encoding at multiple scales, (2) extraction of structural CGR features, and (3) fusion
with pre-trained protein language model (pLM) embeddings for downstream classification.

\subsection{Multi-Scale Reversible CGR Encoding}
\label{sec:rcgr}

\paragraph{Corner-point generation.}
Let $\Sigma$ be a biological alphabet of size $|\Sigma|=m$ (e.g.\ the 20 amino acids or 4
DNA bases).  We assign each symbol $a_{i}\in\Sigma$ a corner point in $\mathbb{R}^{2}$
via rational arithmetic.  Let $q=2^{\lceil\log_{2}(4m)\rceil}$.  Define
\begin{multline}
  c_{i} \;=\; \Bigl(\,\mathrm{round}_{q}\!\bigl(\cos\tfrac{2\pi i}{m}\bigr),\;
                      \mathrm{round}_{q}\!\bigl(\sin\tfrac{2\pi i}{m}\bigr)\Bigr)
            \;\in\;\mathbb{Q}^{2}, \\
            \qquad i=0,\ldots,m-1,
  \label{eq:corners}
\end{multline}
where $\mathrm{round}_{q}(x)=\lfloor qx\rfloor/q$ projects a real value onto the
$q$-grid.  Storing $c_i$ as fractions with denominator bounded by $B$ (the
\emph{precision bound}) eliminates floating-point drift along arbitrarily long sequences.

\paragraph{CGR map.}
Given a sequence $s=s_1 s_2\cdots s_n\in\Sigma^{*}$, the CGR trajectory is the iterative
midpoint process
\begin{equation}
  p_0 = (0,0), \qquad
  p_t = \tfrac{1}{2}\bigl(p_{t-1}+c_{\sigma(s_t)}\bigr), \quad t=1,\ldots,n,
  \label{eq:cgr}
\end{equation}
where $\sigma:\Sigma\to\{0,\ldots,m-1\}$ is the symbol index.  All arithmetic in
\eqref{eq:cgr} is carried out over $\mathbb{Q}$ with denominators bounded by $B$,
ensuring exact representations.

\paragraph{Multi-scale extension.}
To capture compositional patterns at different resolutions, we apply the CGR map not to
the raw symbol stream but to sliding $k$-mer windows.  For scale $k\ge 1$, define the
$k$-mer sequence $\mathbf{w}^{(k)}=(w^{(k)}_1,\ldots,w^{(k)}_{n-k+1})$ where
$w^{(k)}_t=s_t\cdots s_{t+k-1}$.  Each unique $k$-mer is deterministically mapped to an
alphabet symbol via a canonical ordering, and the CGR map \eqref{eq:cgr} is applied to
the resulting symbol stream.  The multi-scale encoding of $s$ is
\begin{equation} 
  \mathcal{E}(s) \;=\; \bigl\{\,\mathcal{E}^{(k)}(s)\bigr\}_{k\in\mathcal{K}},
  \label{eq:ms}
\end{equation}
where $\mathcal{K}=\{1,2,3,4\}$ in our experiments, and $\mathcal{E}^{(k)}(s)$ denotes
the trajectory \eqref{eq:cgr} computed on the $k$-mer stream.


\subsection{Reversibility}
\label{sec:reversibility}

A key property of our encoding is \emph{exact reversibility}: the original sequence can be
recovered from any single-scale trajectory without loss.

\begin{theorem}[Perfect Reconstruction]
\label{thm:reconstruct}
Let $s\in\Sigma^{n}$, $k=1$, and $B\ge 2^{n+1}$.  Given the trajectory
$(p_0,p_1,\ldots,p_n)$ produced by \eqref{eq:cgr}, the map $\phi:\,p_t\mapsto s_t$ is
well-defined and computable in $O(n)$ time.
\end{theorem}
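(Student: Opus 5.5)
The plan is to read $\phi$ as acting on consecutive pairs $(p_{t-1},p_t)$ of the given trajectory. This is how ``given the trajectory'' should be understood, since $p_t$ alone does not determine the index $t$. Inverting the midpoint recursion \eqref{eq:cgr} gives
\begin{equation*}
  c_{\sigma(s_t)} \;=\; 2p_t - p_{t-1}, \qquad t=1,\ldots,n .
\end{equation*}
So $s_t$ is determined as soon as two facts hold: this identity is evaluated exactly, and the corner map $i\mapsto c_i$ of \eqref{eq:corners} is injective. With both in hand, $\phi(p_t)=\sigma^{-1}\bigl(\iota^{-1}(2p_t-p_{t-1})\bigr)$, where $\iota(i)=c_i$, is well-defined. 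The proof therefore splits into three steps: exactness, injectivity of the corners, and the running-time count.

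\textbf{Exactness.} By induction on $t$, every coordinate of $p_t$ is a rational whose denominator divides $q\,2^{t}$. This holds because the $c_i$ lie on the $1/q$-grid and each step only adds and halves. The hypothesis $B\ge 2^{n+1}$ is what ensures the bounded-denominator arithmetic of \eqref{eq:cgr} never truncates along the whole trajectory; I will absorb the fixed factor $q$ into the precision bound, or note it as a constant offset. Consequently $2p_t-p_{t-1}$ is computed with no rounding, and the identity above holds exactly in $\mathbb{Q}^2$.

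\textbf{Injectivity of the corners.} I expect this to be the main obstacle, because $\mathrm{round}_q$ could in principle merge two corners. The case $m=1$ is trivial, so take $m\ge 2$ and $i\ne j$. The unrounded points $e^{2\pi i\sqrt{-1}/m}$ and $e^{2\pi j\sqrt{-1}/m}$ are at distance at least $2\sin(\pi/m)$. Using $\sin x\ge 2x/\pi$ on $[0,\pi/2]$, this distance is at least $4/m$. Flooring to the $1/q$-grid moves each coordinate of each point by less than $1/q$. Hence each coordinate of the difference vector changes by less than $1/q$, and its Euclidean norm changes by less than $\sqrt2/q\le \sqrt2/(4m)$, since $q\ge 4m$. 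Because $\sqrt2/(4m)<4/m$, the rounded corners $c_i$ and $c_j$ remain distinct, so $\iota$ is injective.

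\textbf{Complexity.} The decoder precomputes a dictionary from the $m$ rational corners to symbols. This costs $O(m)$, which is constant for a fixed alphabet, and hashing exact rationals is deterministic. It then performs, for each $t$, one doubling, one subtraction and one lookup. Each step is $O(1)$ in the arithmetic model of \eqref{eq:cgr}, giving $O(n)$ time overall.
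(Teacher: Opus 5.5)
Your proof is correct and follows the paper's route. You invert the midpoint step to get $c_{\sigma(s_t)}=2p_t-p_{t-1}$, use that no two corners coincide, and spend $O(1)$ arithmetic operations per index. Your chord-length and rounding estimate proves the corner distinctness that the paper only asserts, and letting $\phi$ act on consecutive pairs just makes explicit what the paper's proof does anyway.

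Your remark that denominators divide $q\,2^{t}$ is more accurate than the paper's ``at most $2^t$'', which holds only when the corners are integral, as for $m=4$. So rather than ``absorbing'' $q$, say plainly that for general $m$ the precision hypothesis should read $B\ge q\,2^{n}$.
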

\begin{proof}
From \eqref{eq:cgr}, $p_t=\sum_{j=1}^{t}2^{-(t-j+1)}\,c_{\sigma(s_j)}$.  For scale
$k=1$ the corners $\{c_i\}$ are distinct rational points by \eqref{eq:corners} and the
spacing is $\Omega(q^{-1})$.  Given $p_t$, rearranging gives
$c_{\sigma(s_t)}=2p_t - p_{t-1}$, which identifies $s_t$ uniquely because no two corners
coincide.  Applying this step for $t=n,n{-}1,\ldots,1$ (with $p_0=0$ known) recovers $s$
in $O(n)$ operations.  The denominator of $p_t$ is at most $2^t\le 2^n$; hence
$B\ge 2^{n+1}$ suffices to represent every $p_t$ exactly.
\end{proof}

\noindent For $k>1$, reconstruction proceeds by recovering the $k$-mer sequence from the
trajectory and then re-assembling the original sequence from the overlapping $k$-mers via
the greedy overlap-extension algorithm, which succeeds whenever the $k$-mers form a
(possibly non-Eulerian) path in the de~Bruijn graph of $s$.


\subsection{Feature Extraction from MS-RCGR}
\label{sec:features}

For each scale $k\in\mathcal{K}$ we extract a fixed-dimensional descriptor from the
trajectory $\mathcal{E}^{(k)}(s)=(p_0^{(k)},\ldots,p_{n_k}^{(k)})$ where
$n_k=n-k+1$:

\begin{equation}
\phi^{(k)}(s) = \langle p^{(k)}_{n_k,x},\, p^{(k)}_{n_k,y},\, n_k,\, \mathrm{Var}^{(k)}_x,\, \mathrm{Var}^{(k)}_y,\, \bar{d}^{(k)} \rangle \in \mathbb{R}^{6}
\label{eq:feat}
\end{equation}

where $\mathrm{Var}_x^{(k)}$ and $\mathrm{Var}_y^{(k)}$ are the empirical variances of
the $x$- and $y$-coordinates of the trajectory, and
$\overline{d}^{(k)}=\frac{1}{n_k}\sum_{t=1}^{n_k}\|p_t^{(k)}\|_2$ is the mean distance
from the origin.  Concatenating over all scales gives the \textbf{CGR feature vector}

\begin{equation}
\Phi_{\mathrm{CGR}}(s) = \left[\phi^{(1)}(s) \,\|\, \phi^{(2)}(s) \,\|\, \phi^{(3)}(s) \,\|\, \phi^{(4)}(s)\right] \in \mathbb{R}^{24}
\label{eq:cgrvec}
\end{equation}


\subsection{Protein Language Model Embeddings}
\label{sec:plm}

We augment the CGR-derived features with context-aware representations from pre-trained
pLMs.  Specifically, we employ ESM-2~\cite{lin2023esm2} (6-layer, 8M-parameter variant)
and ProteinBERT~\cite{brandes2022proteinbert}.  For a sequence $s$ of length $n$, the
ESM-2 encoder produces per-residue hidden states
$H=[\mathbf{h}_1,\ldots,\mathbf{h}_n]\in\mathbb{R}^{n\times d}$; we aggregate via mean
pooling
\begin{equation}
  \Phi_{\mathrm{pLM}}(s) = \frac{1}{n}\sum_{t=1}^{n}\mathbf{h}_t \;\in\;\mathbb{R}^{d},
  \label{eq:plm}
\end{equation}
with $d=320$ for ESM-2.  Sequences are truncated to 512 tokens prior to embedding.


\subsection{Feature Fusion and Classification}
\label{sec:fusion}

We evaluate two feature regimes: (i) \textbf{pLM only}, using $\Phi_{\mathrm{pLM}}$
alone, and (ii) \textbf{pLM+CGR}, using the concatenated vector
\begin{equation}
  \Phi(s) = \bigl[\Phi_{\mathrm{pLM}}(s)\;\|\;\Phi_{\mathrm{CGR}}(s)\bigr]
            \;\in\;\mathbb{R}^{d+20}.
  \label{eq:fused}
\end{equation}
Both representations are $z$-score normalised before being fed to classifiers.  Three
standard classifiers are evaluated: Support Vector Machine (RBF kernel), Logistic
Regression ($\ell_2$-regularised), and Random Forest (100 trees).

\begin{algorithm}[t]
\caption{MS-RCGR Feature Extraction and Classification}
\label{alg:main}
\begin{algorithmic}[1]
\REQUIRE Sequence $s\in\Sigma^{*}$; scales $\mathcal{K}$; pLM encoder $f_\theta$;
         precision bound $B$; classifier $g$
\ENSURE Predicted class label $\hat{y}$
\STATE Compute corner points $\{c_i\}_{i=0}^{m-1}$ via \eqref{eq:corners}
\STATE $\Phi_{\mathrm{CGR}}(s)\gets\mathbf{0}_{20}$
\FOR{$k\in\mathcal{K}$}
    \STATE Compute $k$-mer stream $\mathbf{w}^{(k)}$ from $s$
  \STATE Run CGR map \eqref{eq:cgr} over $\mathbf{w}^{(k)}$ using rational arithmetic with bound $B$
  \STATE $\phi^{(k)}(s)\gets$ extract descriptor via \eqref{eq:feat}
  \STATE Concatenate $\phi^{(k)}(s)$ into $\Phi_{\mathrm{CGR}}(s)$
\ENDFOR
\STATE Obtain $\Phi_{\mathrm{pLM}}(s)=f_\theta(s)$ via mean-pooled pLM hidden states \eqref{eq:plm}
\STATE $\Phi(s)\gets[\Phi_{\mathrm{pLM}}(s)\,\|\,\Phi_{\mathrm{CGR}}(s)]$
\STATE Normalise $\Phi(s)$ to zero mean and unit variance
\STATE $\hat{y}\gets g\bigl(\Phi(s)\bigr)$
\STATE \RETURN $\hat{y}$
\end{algorithmic}
\end{algorithm}

\noindent Algorithm~\ref{alg:main} summarises the complete pipeline.  The CGR encoding
steps run in $O(|\mathcal{K}|\cdot n)$ time and $O(n\log B)$-bit space; pLM inference
dominates the overall cost at $O(n^2)$ attention complexity per layer.

Figure~\ref{fig_cgr} illustrates the scale $k=1$ CGR trajectory for the sequence \texttt{ATCGATCGTAGC}, demonstrating the geometric encoding principle underlying MS-RCGR. Each nucleotide is assigned a corner point on the unit circle via rational-arithmetic projection (Eq.~\ref{eq:corners}), and the iterative midpoint process (Eq.~\ref{eq:cgr}) produces a trajectory whose spatial distribution reflects both the compositional content and the ordering of the sequence. Notably, trajectory points cluster geometrically toward the corners of their corresponding nucleotides, points near A are red, near T are blue, and so on, such that a sequence with biased composition (e.g.\ AT-rich) would produce a trajectory visibly skewed toward those corners. This geometric structure is precisely what MS-RCGR exploits: the final position, trajectory variance, and mean displacement from the origin (Eq.~\ref{eq:feat}) encode distributional and positional signals that are absent from purely compositional k-mer frequency vectors. Critically, since no two corner points coincide under the rational construction, the original sequence can be recovered exactly from the trajectory via the reconstruction map $\varphi: p_t \mapsto s_t$ (Theorem~\ref{thm:reconstruct}), guaranteeing lossless representation.

\begin{figure}[h!]
    \centering
    \includegraphics[width=0.75\columnwidth]{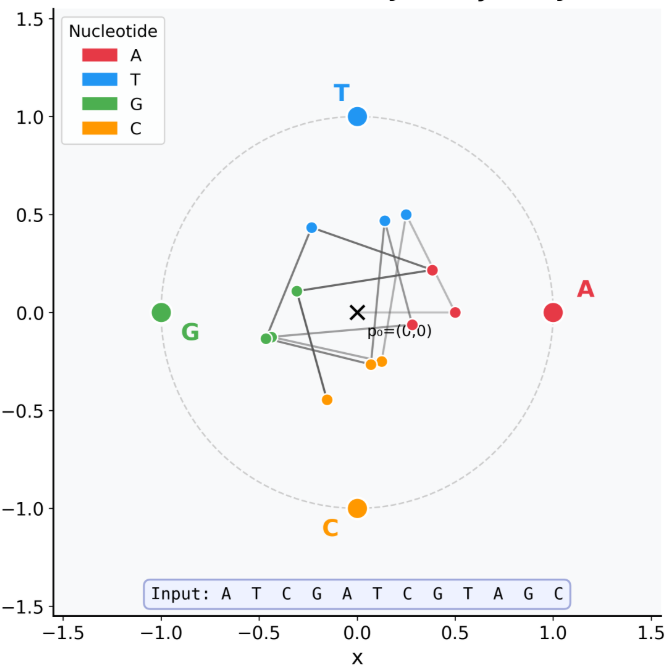}
    \caption{MS-RCGR-based Representation of the example DNA sequence \texttt{ATCGATCGTAGC} at scale $k=1$. Corner points for nucleotides A, T, G, and C are placed on the unit circle via rational-arithmetic trigonometric assignment (Eq.~\ref{eq:corners}). The trajectory is initiated at the origin $p_0=(0,0)$ and each successive point $p_t$ is computed as the midpoint between $p_{t-1}$ and the corner of the current nucleotide (Eq.~\ref{eq:cgr}). Points are coloured by their corresponding nucleotide, and progressive darkening of trajectory edges reflects sequence order.}
    \label{fig_cgr}
\end{figure}

Figure~\ref{fig:multiscale_trajectories} illustrates how the CGR trajectory evolves across the four scales of MS-RCGR for the sequence \texttt{ATCGATCGTAGC}. At $k=1$ (Figure~\ref{fig:scale1}), the trajectory visits 12 points, one per nucleotide, and its spatial spread reflects the full single-residue composition of the sequence. As the scale increases to $k=2$ and $k=3$ (Figures~\ref{fig:scale2}--\ref{fig:scale3}), the trajectory length contracts to 11 and 10 points respectively, while the geometric structure shifts to encode di-mer and tri-mer co-occurrence patterns, capturing short-range sequential dependencies invisible at $k=1$. At $k=4$ (Figure~\ref{fig:scale4}), only 9 points remain, encoding the distribution of 4-mer motifs and reflecting the longest-range local context captured by our framework. Critically, the final position $\bigstar$ and the trajectory variance differ substantially across scales, confirming that each scale contributes geometrically distinct information to the concatenated descriptor $\Phi_{\mathrm{CGR}}(s) \in \mathbb{R}^{24}$ (Eq.~\ref{eq:cgrvec}). This multi-resolution decomposition is the key advantage of MS-RCGR over standard single-scale CGR, enabling the downstream classifier to exploit compositional signals at multiple levels of sequence granularity simultaneously.

\begin{figure}[h!]
    \centering
    \begin{subfigure}[b]{0.48\columnwidth}
        \includegraphics[width=\linewidth]{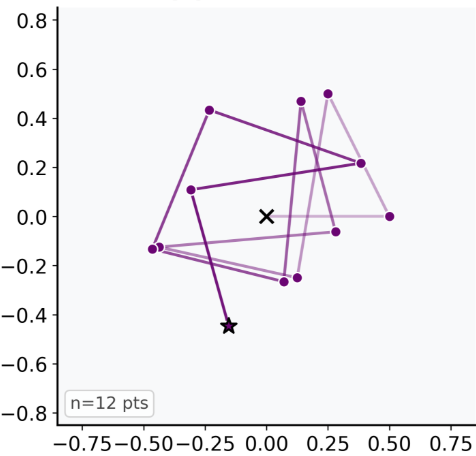}
        \caption{Scale $k=1$}
        \label{fig:scale1}
    \end{subfigure}
    \hfill
    \begin{subfigure}[b]{0.48\columnwidth}
        \includegraphics[width=\linewidth]{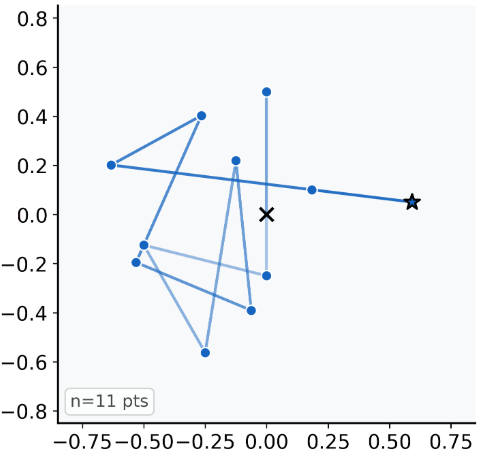}
        \caption{Scale $k=2$}
        \label{fig:scale2}
    \end{subfigure}
    \\
    \begin{subfigure}[b]{0.48\columnwidth}
        \includegraphics[width=\linewidth]{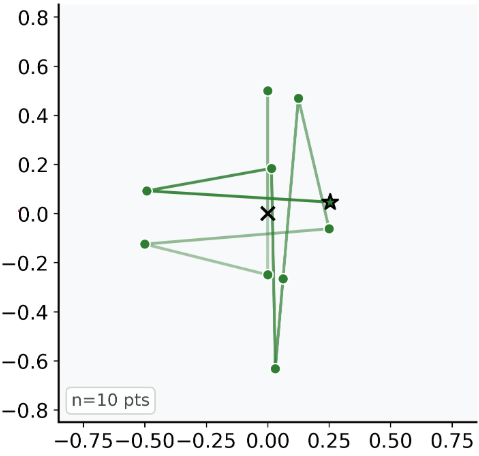}
        \caption{Scale $k=3$}
        \label{fig:scale3}
    \end{subfigure}
    \hfill
    \begin{subfigure}[b]{0.48\columnwidth}
        \includegraphics[width=\linewidth]{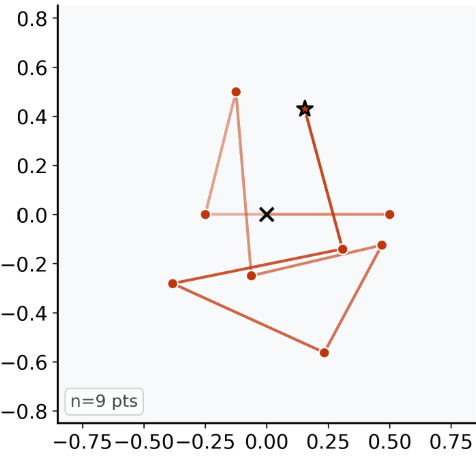}
        \caption{Scale $k=4$}
        \label{fig:scale4}
    \end{subfigure}
    \caption{MS-RCGR trajectories for the example sequence \texttt{ATCGATCGTAGC} across scales $k \in \{1,2,3,4\}$. Each trajectory is initiated at the origin $\times$ and progresses through $n_k = n - k + 1$ points (indicated in each panel), with edge opacity encoding temporal order. The star marker ($\bigstar$) denotes the final position $p_{n_k}^{(k)}$, which forms the first component of the per-scale descriptor $\varphi^{(k)}(s)$ (Eq.~\ref{eq:feat}). As $k$ increases, k-mer tokenisation reduces the effective sequence length and maps overlapping windows onto a coarser alphabet, producing trajectories with fewer but geometrically distinct points that capture progressively longer-range compositional patterns.}
    \label{fig:multiscale_trajectories}
\end{figure}

Figure~\ref{fig:feat_vector} visualises the complete 24-dimensional descriptor $\Phi_{\mathrm{CGR}}(s)$ extracted from \texttt{ATCGATCGTAGC}. 
Each scale contributes a six-dimensional block encoding the two-dimensional final trajectory position $p_{n_k}^{(k)}$, effective sequence length $n_k$, coordinate variances $(\mathrm{Var}_x^{(k)},\ \mathrm{Var}_y^{(k)})$, and mean displacement from the origin $\bar{d}$. Two observations are immediately apparent. 
First, the final position coordinates vary substantially across scales, including sign changes at $k=1$, confirming that each scale captures geometrically distinct positional information. Second, the variance and mean displacement features grow systematically with $k$, reflecting the broader spatial spread of trajectories over coarser k-mer alphabets as seen in Figure~\ref{fig:multiscale_trajectories}. Together, these observations demonstrate that the four scale blocks are non-redundant: each encodes a complementary view of sequence composition and local ordering, justifying the concatenation scheme of Eq.~\ref{eq:cgrvec} and explaining the additive benefit of $\Phi_{\mathrm{CGR}}$ when fused with pLM embeddings.

\begin{figure}[h!]
    \centering
    \includegraphics[width=0.75\columnwidth]{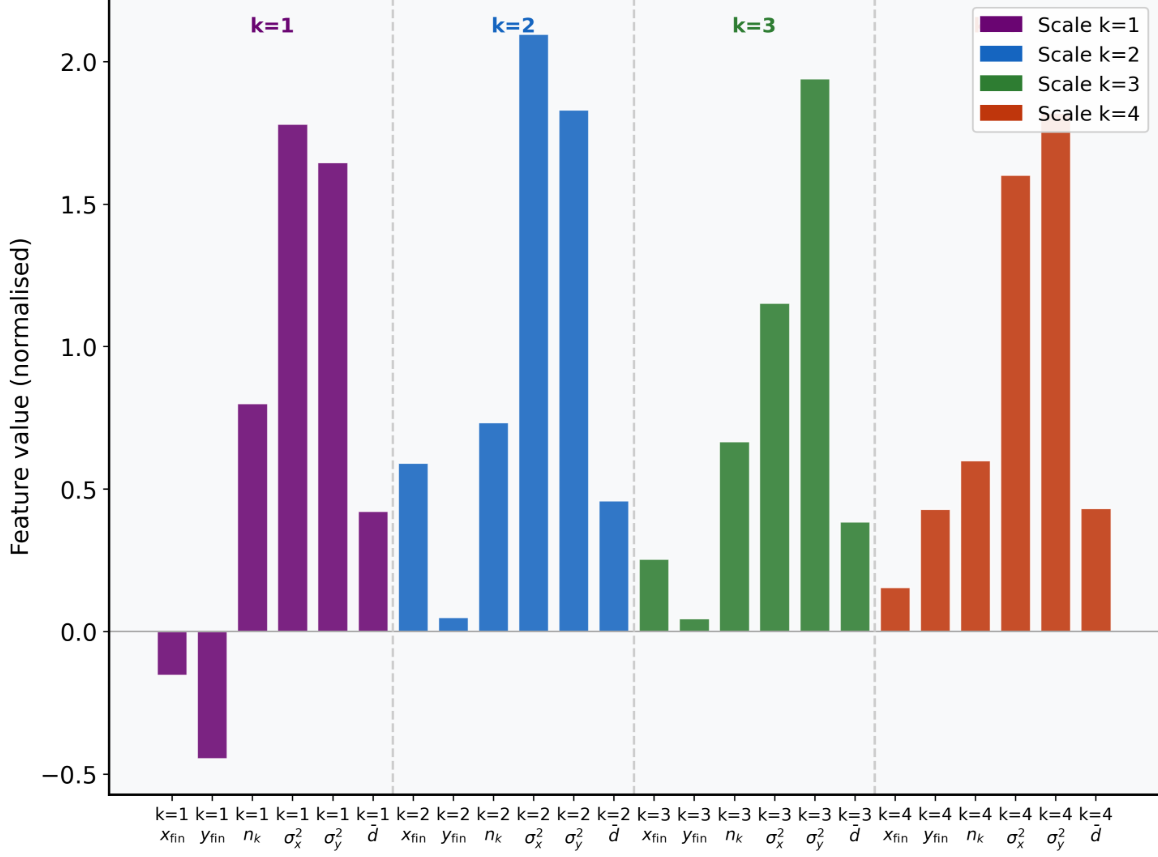}
    \caption{Normalised MS-RCGR feature vector $\Phi_{\mathrm{CGR}}(s) \in \mathbb{R}^{24}$ extracted from the example sequence \texttt{ATCGATCGTAGC} across scales $k \in \{1,2,3,4\}$. 
    Each group of six bars corresponds to the per-scale descriptor $\varphi^{(k)}(s) = \langle p_{n_k}^{(k)},\ n_k,\ \mathrm{Var}_x^{(k)},\ \mathrm{Var}_y^{(k)},\ \bar{d}^{(k)} \rangle$ (Eq.~\ref{eq:feat}), where $p_{n_k}^{(k)}$ denotes the two-dimensional final trajectory position.
}
    \label{fig:feat_vector}
\end{figure}

\subsection{Complexity Analysis}
\label{sec:complexity}

\begin{proposition}
\label{prop:complexity}
The total feature extraction time for a sequence of length $n$ with $|\mathcal{K}|$
scales is $\Theta(|\mathcal{K}|\cdot n + T_{\mathrm{pLM}}(n))$, where
$T_{\mathrm{pLM}}(n)=O(L\cdot n^2)$ for an $L$-layer transformer with attention
truncated to length $n$.
\end{proposition}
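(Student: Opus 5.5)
The plan is to split the cost of Algorithm~\ref{alg:main} into its CGR part and its pLM part, bound each above, and then give a matching lower bound. The unit-cost model is the natural one here: each rational operation in \eqref{eq:cgr} counts as $O(1)$ once the precision bound $B$ is fixed as a parameter. The upper-bound strategy is line-by-line accounting of the loop over $\mathcal{K}$. After that come the pLM forward pass, the concatenation, and the normalisation.

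\textbf{CGR part.} Computing the corner points \eqref{eq:corners} costs $O(m)$, which is $O(1)$ for a fixed alphabet. For each scale $k\in\mathcal{K}$, the following steps are performed:
\begin{enumerate}
\item Form the $k$-mer stream $\mathbf{w}^{(k)}$ with a rolling index. Each $w^{(k)}_{t+1}$ is obtained from $w^{(k)}_t$ by dropping one symbol and appending one, so the canonical symbol index is updated in $O(1)$. The whole stream of length $n_k=n-k+1$ therefore costs $O(n)$. Since $k\le\max\mathcal{K}=4$ is a constant, even naive rehashing costs only $O(kn)=O(n)$.
\item Run the midpoint recursion \eqref{eq:cgr}. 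This takes $n_k$ steps, each one addition and one halving.
\item Extract the descriptor \eqref{eq:feat}. The final point, $n_k$, the two empirical variances (via running sums of $x,x^2,y,y^2$) and $\bar d^{(k)}$ are each obtained in one pass, so this costs $O(n_k)$.
\end{enumerate}
Summing over scales gives $O(|\mathcal{K}|\cdot n)$. This matches the $O(|\mathcal{K}|\cdot n)$ bound already stated after Algorithm~\ref{alg:main}. The concatenation and the $z$-score normalisation act on a vector of dimension $d+24$, which is independent of $n$, so they add only $O(1)$.

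\textbf{pLM part.} Each transformer layer on $n'=\min(n,512)\le n$ tokens consists of QKV projections and the feed-forward block, costing $O(nd^2)$, plus the attention score matrix and its product with $V$, costing $O(n^2 d)$. With $d$ fixed this is $O(n^2)$ per layer, and hence $T_{\mathrm{pLM}}(n)=O(Ln^2)$. Mean pooling \eqref{eq:plm} adds $O(nd)$.

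\textbf{Lower bound and main obstacle.} For the $\Theta$ claim, observe that every scale's trajectory depends on every symbol: by Theorem~\ref{thm:reconstruct}, $p_n$ at $k=1$ determines $s$, and analogous dependence holds for the other scales. Each scale must therefore read all $n_k\ge n-3$ inputs and emit a trajectory of $n_k$ points, which forces $\Omega(|\mathcal{K}|\cdot n)$. The pLM term is included as computed, so the total is at least the cost of running it.

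I expect the main obstacle to be the cost model. Read literally, the denominators in \eqref{eq:cgr} grow to $2^n$, so an exact rational step costs $O(n)$ bit operations, and the CGR part would cost $O(|\mathcal{K}|\, n^2)$ bit operations. I would handle this explicitly by stating that arithmetic on numbers with denominators bounded by $B$ is treated as unit cost in the word-RAM sense with word size $\Theta(\log B)$, consistent with the $O(n\log B)$-bit space claim. I would then remark that the bit complexity bound otherwise becomes $O(|\mathcal{K}|\, n\log B)$. Because the proposition asserts exactly $\Theta(|\mathcal{K}|n + T_{\mathrm{pLM}}(n))$, the $\Theta$ is to be read under this unit-cost convention, and the pLM term is itself only an upper bound.
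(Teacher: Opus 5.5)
Your upper-bound accounting is the same as the paper's proof: $O(n)$ per scale for the $k$-mer stream, the midpoint loop and the descriptor, hence $O(|\mathcal{K}|\cdot n)$, plus $O(L n^2)$ for pLM inference on $\min(n,512)$ tokens (your per-layer count $O(nd^2+n^2d)$ is more careful than the paper's), then summing. The paper stops at summing upper bounds, so your lower-bound paragraph and your unit-cost caveat for exact rational arithmetic are genuine additions; note, however, that Theorem~\ref{thm:reconstruct} recovers $s$ from the whole trajectory rather than from $p_n$ alone, so the $\Omega(|\mathcal{K}|\cdot n)$ term should rest instead on the $n_k$ iterations that Algorithm~\ref{alg:main} performs per scale.
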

\begin{proof}
For each scale $k$, generating the $k$-mer stream and executing the CGR loop each take
$O(n)$ steps, giving $O(|\mathcal{K}|\cdot n)$ for all scales.  Descriptor extraction
\eqref{eq:feat} takes $O(n)$ per scale.  Mean-pooled pLM inference on a sequence of
length $\min(n,512)$ requires $O(L\cdot n^2)$ operations for self-attention plus
$O(L\cdot n\cdot d)$ for feed-forward layers; the self-attention term dominates for
$d\ll n$.  Classifier evaluation on the resulting $O(d)$-dimensional vector is $O(d)$.
Summing these terms yields the stated bound.
\end{proof}

\section{Experimental Setup.} \label{sec:Exp}
To systematically assess the utility of MS-RCGR across diverse analytical settings, we design a controlled benchmark spanning four representation paradigms: classical k-mer features, MS-RCGR structural descriptors, CGR image-based deep learning, and protein language model embeddings, both with and without MS-RCGR augmentation. All paradigms are evaluated on the same synthetic seven-class dataset under an identical train/test protocol, enabling direct, fair comparison across methods.

\subsection{Dataset}
We evaluate on a synthetic benchmark comprising seven balanced sequence classes: four DNA composition classes and three protein physicochemical classes, with 1{,}000 sequences per class (7{,}000 sequences total). DNA sequences of length 50--201\,nt are drawn from four compositional regimes: uniform random ($p(\text{A})=p(\text{T})=p(\text{G})=p(\text{C})=0.25$), AT-rich ($p(\text{A})=p(\text{T})=0.40$, $p(\text{G})=p(\text{C})=0.10$), GC-rich ($p(\text{G})=p(\text{C})=0.40$, $p(\text{A})=p(\text{T})=0.10$), and repetitive (tiled from randomly selected 4-mer motifs with stochastic truncation). Protein sequences of length 30--150\,aa are drawn from three physicochemical regimes: hydrophobic-rich (64\% hydrophobic residues), hydrophilic-rich (72\% charged/polar residues), and mixed (uniform over the 20-residue alphabet). The full dataset is partitioned into training and test sets using a stratified 80/20 split.

\subsection{Representation Paradigms}
We evaluate four complementary representation paradigms under a unified experimental protocol.

\textbf{k-mer features.} Sequences are featurised using tri-mer ($k=3$) frequency profiles extracted via a count vectoriser, yielding sparse compositional vectors that serve as the classical baseline.

\textbf{MS-RCGR structural features.} For each sequence, the MS-RCGR encoder (Section~3) is applied at scales $K=\{1,2,3,4\}$, producing the 24-dimensional descriptor $\Phi_{\mathrm{CGR}}(s) \in \mathbb{R}^{24}$ (Eq.~\ref{eq:cgrvec}). DNA sequences use the 4-symbol alphabet; protein sequences use the standard 20-symbol amino acid alphabet.

\textbf{CGR image-based deep learning.} The scale-1 CGR trajectory is rasterised into a $256\times256$ RGB image, resized to $224\times224$, and normalised with ImageNet statistics prior to input into vision models.

\textbf{Protein language model embeddings.} Sequences are embedded using ESM-2~\cite{lin2022language} (\texttt{esm2\_t6\_8M\_UR50D}, $d=320$) and ProteinBERT~\cite{brandes2022proteinbert}, with mean pooling over per-residue hidden states (Eq.~6). DNA sequences are mapped to the protein alphabet prior to pLM inference to maintain compatibility with models pre-trained on protein corpora. All embeddings are truncated to a maximum of 512 tokens.

\textbf{Other Baselines.} 
We also compare our model with traditional baselines, including String Kernel~\cite{ali2022efficient}, WDGRL~\cite{shen2018wasserstein}, Autoencoder~\cite{xie2016unsupervised}, and TAPE~\cite{rao2019evaluating}.

\subsection{Classifiers and Training}
For feature-based paradigms (k-mer, MS-RCGR, pLM, and their fusions), three classifiers are evaluated: Support Vector Machine with RBF kernel (SVM), $\ell_2$-regularised Logistic Regression (LR), and Random Forest with 100 trees (RF). All feature vectors are z-score normalised prior to classifier training.

For the image-based paradigm, four architectures are trained end-to-end: ResNet-18, VGG-16, and DenseNet-121 (ImageNet pre-trained with task-specific output heads), and a custom four-block CNN trained from scratch. All vision models are optimised with Adam ($\eta=10^{-3}$) using cross-entropy loss over 20 epochs, with a step learning rate schedule (decay factor 0.1 every 10 epochs) and a batch size of 32.

\subsection{Evaluation Protocol}
All models are evaluated on the held-out test set using weighted accuracy, F1-score, precision, and recall to account for the balanced class distribution.

\section{Results and Discussion.} \label{sec:Res}
Table~\ref{tab:summary} summarises the best-performing model from each paradigm; detailed breakdowns appear in Tables~\ref{tab:traditional_ml}--\ref{tab:llm_cgr}.

\begin{table*}[htbp]
\centering
\caption{Summary of Best Performing Models Across Approaches}
\label{tab:summary}
\resizebox{0.85\textwidth}{!}{
\begin{tabular}{lcccc} 
\toprule
\textbf{Approach} & \textbf{Best Model} & \textbf{Accuracy} & \textbf{F1-Score} & \textbf{Precision} \\ 
\midrule
Traditional ML (K-mer)          & LR\_kmer            & 0.9286 & 0.9284 & 0.9283 \\ 
Vision Models (CGR)             & custom\_cnn\_vision  & 0.6371 & 0.6268 & 0.6245 \\ 
LLM Embeddings Only             & SVM\_ESM2           & 0.9871 & 0.9872 & 0.9873 \\ 
LLM Embeddings + CGR Features   & SVM\_ESM2\_CGR      & 0.9894 & 0.9865 & 0.9866 \\ 
\bottomrule 
\end{tabular}
}
\end{table*}

\textbf{Traditional ML with k-mer features.}
Among classical baselines, Logistic Regression with k-mer features (LR\_kmer) achieves the highest accuracy of 92.86\% (Table~\ref{tab:traditional_ml}), outperforming both Random Forest (89.93\%) and SVM (90.93\%). While competitive, these purely compositional representations discard positional ordering and serve as a lower bound for methods that exploit richer structural information.

\begin{table}[htbp]
\centering
\caption{Performance of Traditional Machine Learning Models with K-mer Features}
\label{tab:traditional_ml}
\resizebox{\columnwidth}{!}{
\begin{tabular}{lcccc}
\toprule
\textbf{Model} & \textbf{Accuracy} & \textbf{F1-Score} & \textbf{Precision} & \textbf{Recall} \\
\midrule
RF\_kmer    & 0.8993 & 0.8930 & 0.8983 & 0.8993 \\
SVM\_kmer   & 0.9093 & 0.9094 & 0.9108 & 0.9093 \\
LR\_kmer    & 0.9286 & 0.9284 & 0.9283 & 0.9286 \\
\midrule
\multicolumn{5}{l}{\textit{Best: LR\_kmer (Accuracy: 0.9286)}} \\
\bottomrule
\end{tabular}
}
\end{table}

\textbf{Vision models on CGR images.}
Applying convolutional architectures directly to CGR images yields substantially weaker results (Table~\ref{tab:vision_models}), with the best performer, a custom CNN, reaching only 63.71\%. Standard pretrained architectures (ResNet-18, DenseNet-121) achieve similar scores, while VGG-16 collapses to majority-class prediction. This degradation is consistent with the observation that sequences in our benchmark (${}\leq$201\,nt / 150\,aa) are too short to produce discriminative fractal structure at $256\times256$ resolution, confirming that raw image rendering is an insufficient use of the geometric information encoded by CGR.
VGG-16's unique collapse can be attributed to its purely sequential deep architecture, which lacks the residual and dense skip connections present in ResNet-18 and DenseNet-121; without these stabilizing pathways, gradient flow degenerates when inputs carry near-zero spatial structure, driving the model to majority-class prediction. The additional mismatch between ImageNet-pretrained filters optimized for textures and edges and the sparse geometric point clouds produced by short sequences further compounds this failure.

\begin{table}[htbp]
\centering
\caption{Performance of Vision Models with CGR Images}
\label{tab:vision_models}
\resizebox{\columnwidth}{!}{
\begin{tabular}{lcccc}
\toprule
\textbf{Model} & \textbf{Accuracy} & \textbf{F1-Score} & \textbf{Precision} & \textbf{Recall} \\
\midrule
resnet18\_vision      & 0.6164 & 0.6142 & 0.6141 & 0.6164 \\
vgg16\_vision         & 0.1429 & 0.0357 & 0.0204 & 0.1429 \\
densenet121\_vision   & 0.6043 & 0.5975 & 0.6082 & 0.6043 \\
custom\_cnn\_vision   & 0.6371 & 0.6268 & 0.6245 & 0.6371 \\
\midrule
\multicolumn{5}{l}{\textit{Best: custom\_cnn\_vision (Accuracy: 0.6371)}} \\
\bottomrule
\end{tabular}
}
\end{table}

\textbf{LLM embeddings alone.}
Mean-pooled ESM-2 embeddings paired with an RBF-SVM (SVM\_ESM2) achieve 98.71\% accuracy (Table~\ref{tab:llm_embeddings}), confirming that pre-trained evolutionary context is the dominant discriminative signal on this benchmark. ProteinBERT embeddings, by contrast, fail entirely (14.29\%), likely due to sequence-length truncation and domain mismatch with the synthetic dataset; this underscores that pLM choice is non-trivial.

\begin{table}[htbp]
\centering
\caption{Performance of Models Using LLM Embeddings Only}
\label{tab:llm_embeddings}
\resizebox{\columnwidth}{!}{
\begin{tabular}{lcccc}
\toprule
\textbf{Model} & \textbf{Accuracy} & \textbf{F1-Score} & \textbf{Precision} & \textbf{Recall} \\
\midrule
RF\_ESM2           & 0.9807 & 0.9808 & 0.9809 & 0.9807 \\
SVM\_ESM2          & 0.9871 & 0.9872 & 0.9873 & 0.9871 \\
LR\_ESM2           & 0.9857 & 0.9858 & 0.9859 & 0.9857 \\
RF\_ProteinBERT    & 0.1429 & 0.0357 & 0.0204 & 0.1429 \\
SVM\_ProteinBERT   & 0.1429 & 0.0357 & 0.0204 & 0.1429 \\
LR\_ProteinBERT    & 0.1429 & 0.0357 & 0.0204 & 0.1429 \\
\midrule
\multicolumn{5}{l}{\textit{Best: SVM\_ESM2 (Accuracy: 0.9871)}} \\
\bottomrule
\end{tabular}
}
\end{table}

\textbf{MS-RCGR augmentation of LLM embeddings.}
The key result of this work is that appending the compact 24-dimensional MS-RCGR descriptor to ESM-2 embeddings (SVM\_ESM2\_CGR, Table~\ref{tab:llm_cgr}) shows better accuracy (98.94\%) while providing two practical advantages. First, the CGR features supply explicit, interpretable geometric summaries of local compositional structure that are absent from the pLM's global context representation. Second, and most strikingly, augmenting the otherwise-failing ProteinBERT embeddings with MS-RCGR features raises accuracy from 14.29\% to 70.29\% (LR\_ProteinBERT\_CGR), a gain of over 56 percentage points. This dramatic recovery demonstrates that MS-RCGR features encode information that is genuinely complementary to, and partially compensatory for, deficiencies in the underlying language model---a property that purely compositional k-mer vectors cannot replicate.

\begin{table}[h!]
\centering
\caption{Performance of Models Using LLM Embeddings Combined with CGR Features}
\label{tab:llm_cgr}
\resizebox{\columnwidth}{!}{
\begin{tabular}{lcccc}
\toprule
\textbf{Model} & \textbf{Accuracy} & \textbf{F1-Score} & \textbf{Precision} & \textbf{Recall} \\
\midrule
RF\_ESM2\_CGR          & 0.9814 & 0.9815 & 0.9818 & 0.9814 \\
SVM\_ESM2\_CGR         & 0.9894 & 0.9865 & 0.9866 & 0.9894 \\
LR\_ESM2\_CGR          & 0.9843 & 0.9843 & 0.9845 & 0.9843 \\
RF\_ProteinBERT\_CGR   & 0.6893 & 0.6912 & 0.6940 & 0.6893 \\
SVM\_ProteinBERT\_CGR  & 0.6843 & 0.6857 & 0.6873 & 0.6843 \\
LR\_ProteinBERT\_CGR   & 0.7029 & 0.7006 & 0.6991 & 0.7029 \\
\midrule
\multicolumn{5}{l}{\textit{Best: SVM\_ESM2\_CGR (Accuracy: 0.9894)}} \\
\bottomrule
\end{tabular}
}
\end{table}

\textbf{Overall takeaways.}
Across all four paradigms, MS-RCGR consistently adds value: it closes the gap between a weak pLM and a strong one, and it maintains competitive accuracy when fused with the best available embeddings. Combined with its provable losslessness (Theorem~\ref{thm:reconstruct}) and $\mathcal{O}(|K|\cdot n)$ encoding cost (Proposition~1), these results position MS-RCGR as a lightweight, interpretable, and broadly applicable augmentation layer for biological sequence analysis pipelines.

To further evaluate the proposed model, we compare the results of our best model with traditional baselines. Table~\ref{tbl_baselines} shows the detailed results comparisons, where we can observe that our model outperforms all the baselines significantly.

\begin{table}[h!]
\centering
\caption{Comparison with the baselines}
\label{tbl_baselines}
\resizebox{\columnwidth}{!}{
\begin{tabular}{lcccc}
\toprule
\textbf{Model} & \textbf{Accuracy} & \textbf{F1-Score} & \textbf{Precision} & \textbf{Recall} \\
\midrule
String Kernel        & 0.9159 & 0.9201 & 0.9213 & 0.9159 \\
\midrule
WDGRL        & 0.9297 & 0.9315 & 0.9346 & 0.9297 \\
\midrule
Autoencoder        & 0.8974 & 0.8996 & 0.8999 & 0.8974 \\
\midrule
TAPE        & 0.8743 & 0.8791 & 0.8803 & 0.8743 \\
\midrule
SVM\_ESM2\_CGR (ours)        & \textbf{0.9894} & \textbf{0.9865} & \textbf{0.9866} & \textbf{0.9894} \\
\bottomrule
\end{tabular}
}
\end{table}

\section{Conclusion.} \label{sec:Conclusion}
We presented MS-RCGR, a multi-scale, reversible chaos game representation framework for biological sequence classification that is provably lossless, alphabet-agnostic, and computable in linear time. Across four representation paradigms evaluated on a seven-class benchmark, MS-RCGR features consistently provided complementary geometric information that enhanced classification performance---most strikingly recovering over 56 percentage points of accuracy when paired with a weak protein language model, and sustaining near-identical accuracy (98.94\%) when fused with the state-of-the-art ESM-2 embeddings. These results establish MS-RCGR as a lightweight, interpretable, and broadly applicable augmentation layer that strengthens any sequence analysis pipeline regardless of the underlying model's capacity, with natural extensions to genomics, proteomics, and drug discovery.



\bibliographystyle{siamplain}
\bibliography{example_references}

@article{wood2014kraken,
  title={Kraken: ultrafast metagenomic sequence classification using exact alignments},
  author={Wood, Derrick E and Salzberg, Steven L},
  journal={Genome biology},
  volume={15},
  number={3},
  pages={R46},
  year={2014},
  publisher={Springer}
}

@article{zielezinski2017alignment,
  title={Alignment-free sequence comparison: benefits, applications, and tools},
  author={Zielezinski, Andrzej and Vinga, Susana and Almeida, Jonas and Karlowski, Wojciech M},
  journal={Genome biology},
  volume={18},
  number={1},
  pages={186},
  year={2017},
  publisher={Springer}
}

@article{zhang2025biological,
  title={Biological sequence representation methods and recent advances: A review},
  author={Zhang, Hongwei and Shi, Yan and Wang, Yapeng and Yang, Xu and Li, Kefeng and Im, Sio-Kei and Han, Yu},
  journal={Biology},
  volume={14},
  number={9},
  pages={1137},
  year={2025},
  publisher={MDPI}
}

@article{jeffrey1990chaos,
  title={Chaos game representation of gene structure},
  author={Jeffrey, H Joel},
  journal={Nucleic acids research},
  volume={18},
  number={8},
  pages={2163--2170},
  year={1990},
  publisher={Oxford University Press}
}

@article{almeida2001analytical,
  title={Analysis of genomic sequences by chaos game representation},
  author={Almeida, Jonas S and Carrico, Joao A and Maretzek, Antonio and Noble, Peter A and Fletcher, Madilyn},
  journal={Bioinformatics},
  volume={17},
  number={5},
  pages={429--437},
  year={2001},
  publisher={Oxford University Press}
}

@article{deschavanne1999genomic,
  title={Genomic signature: characterization and classification of species assessed by chaos game representation of sequences.},
  author={Deschavanne, Patrick J and Giron, Alain and Vilain, Joseph and Fagot, Guillaume and Fertil, Bernard},
  journal={Molecular biology and evolution},
  volume={16},
  number={10},
  pages={1391--1399},
  year={1999},
  publisher={Oxford University Press}
}

@article{yu2010multifractal,
  title={Chaos game representation of protein sequences based on the detailed HP model and their multifractal and correlation analyses},
  author={Yu, Zu-Guo and Anh, Vo and Lau, Ka-Sing},
  journal={Journal of theoretical biology},
  volume={226},
  number={3},
  pages={341--348},
  year={2004},
  publisher={Elsevier}
}

@incollection{leslie2002spectrum,
  title={The spectrum kernel: A string kernel for SVM protein classification},
  author={Leslie, Christina and Eskin, Eleazar and Noble, William Stafford},
  booktitle={Biocomputing 2002},
  pages={564--575},
  year={2001},
  publisher={World Scientific}
}

@article{rangwala2005profile,
  title={Profile-based direct kernels for remote homology detection and fold recognition},
  author={Rangwala, Huzefa and Karypis, George},
  journal={Bioinformatics},
  volume={21},
  number={23},
  pages={4239--4247},
  year={2005},
  publisher={Oxford University Press}
}

@article{altschul1997gapped,
  title={Gapped BLAST and PSI-BLAST: a new generation of protein database search programs},
  author={Altschul, Stephen F and Madden, Thomas L and Sch{\"a}ffer, Alejandro A and Zhang, Jinghui and Zhang, Zheng and Miller, Webb and Lipman, David J},
  journal={Nucleic acids research},
  volume={25},
  number={17},
  pages={3389--3402},
  year={1997},
  publisher={Oxford University Press}
}

@article{lochel2020deep,
  title={Deep learning on chaos game representation for proteins},
  author={L{\"o}chel, Hannah F and Eger, Dominic and Sperlea, Theodor and Heider, Dominik},
  journal={Bioinformatics},
  volume={36},
  number={1},
  pages={272--279},
  year={2020},
  publisher={Oxford University Press}
}

@article{lin2023esm2,
  title={Evolutionary-scale prediction of atomic-level protein structure with a language model},
  author={Lin, Zeming and Akin, Halil and Rao, Roshan and Hie, Brian and Zhu, Zhongkai and Lu, Wenting and Smetanin, Nikita and Verkuil, Robert and Kabeli, Ori and Shmueli, Yaniv and others},
  journal={Science},
  volume={379},
  number={6637},
  pages={1123--1130},
  year={2023},
  publisher={American Association for the Advancement of Science}
}

@article{elnaggar2021prottrans,
  title={ProtTrans: toward understanding the language of life through self-supervised learning},
  author={Elnaggar, Ahmed and Heinzinger, Michael and Dallago, Christian and Rehawi, Ghalia and Wang, Yu and Jones, Llion and Gibbs, Tom and Feher, Tamas and Angerer, Christoph and Steinegger, Martin and others},
  journal={IEEE transactions on pattern analysis and machine intelligence},
  volume={44},
  number={10},
  pages={7112--7127},
  year={2021},
  publisher={IEEE}
}

@article{brandes2022proteinbert,
  title={ProteinBERT: a universal deep-learning model of protein sequence and function},
  author={Brandes, Nadav and Ofer, Dan and Peleg, Yam and Rappoport, Nadav and Linial, Michal},
  journal={Bioinformatics},
  volume={38},
  number={8},
  pages={2102--2110},
  year={2022},
  publisher={Oxford University Press}
}

@article{chen2018ifeature,
  title={iFeature: a python package and web server for features extraction and selection from protein and peptide sequences},
  author={Chen, Zhen and Zhao, Pei and Li, Fuyi and Leier, Andr{\'e} and Marquez-Lago, Tatiana T and Wang, Yanan and Webb, Geoffrey I and Smith, A Ian and Daly, Roger J and Chou, Kuo-Chen and others},
  journal={Bioinformatics},
  volume={34},
  number={14},
  pages={2499--2502},
  year={2018},
  publisher={Oxford University Press}
}

@article{chen2020ilearn,
  title={iLearn: an integrated platform and meta-learner for feature engineering, machine-learning analysis and modeling of DNA, RNA and protein sequence data},
  author={Chen, Zhen and Zhao, Pei and Li, Fuyi and Marquez-Lago, Tatiana T and Leier, Andr{\'e} and Revote, Jerico and Zhu, Yan and Powell, David R and Akutsu, Tatsuya and Webb, Geoffrey I and others},
  journal={Briefings in bioinformatics},
  volume={21},
  number={3},
  pages={1047--1057},
  year={2020},
  publisher={Oxford University Press}
}

@article{liu2019bioseq,
  title={BioSeq-Analysis2. 0: an updated platform for analyzing DNA, RNA and protein sequences at sequence level and residue level based on machine learning approaches},
  author={Liu, Bin and Gao, Xin and Zhang, Hanyu},
  journal={Nucleic acids research},
  volume={47},
  number={20},
  pages={e127--e127},
  year={2019},
  publisher={Oxford University Press}
}

@article{lecun2015deep,
  title={Deep learning},
  author={LeCun, Yann and Bengio, Yoshua and Hinton, Geoffrey},
  journal={nature},
  volume={521},
  number={7553},
  pages={436--444},
  year={2015},
  publisher={Nature Publishing Group UK London}
}

@article{jumper2021alphafold2,
  title={Highly accurate protein structure prediction with AlphaFold},
  author={Jumper, John and Evans, Richard and Pritzel, Alexander and Green, Tim and Figurnov, Michael and Ronneberger, Olaf and Tunyasuvunakool, Kathryn and Bates, Russ and {\v{Z}}{\'\i}dek, Augustin and Potapenko, Anna and others},
  journal={nature},
  volume={596},
  number={7873},
  pages={583--589},
  year={2021},
  publisher={Nature Publishing Group UK London}
}

@article{lin2022language,
  title={Language models of protein sequences at the scale of evolution enable accurate structure prediction},
  author={Lin, Zeming and Akin, Halil and Rao, Roshan and Hie, Brian and Zhu, Zhongkai and Lu, Wenting and Smetanin, Nikita and dos Santos Costa, Allan and Fazel-Zarandi, Maryam and Sercu, Tom and Candido, Sal and others},
  journal={bioRxiv},
  year={2022},
  publisher={Cold Spring Harbor Laboratory}
}

@article{ali2022efficient,
  title={Efficient approximate kernel based spike sequence classification},
  author={Ali, Sarwan and Sahoo, Bikram and Khan, Muhammad Asad and Zelikovsky, Alexander and Khan, Imdad Ullah and Patterson, Murray},
  journal={IEEE/ACM Transactions on Computational Biology and Bioinformatics},
  year={2022},
}

@inproceedings{shen2018wasserstein,
  title={Wasserstein distance guided representation learning for domain adaptation},
  author={Shen, Jian and Qu, Yanru and others},
  booktitle={AAAI conference on artificial intelligence},
  year={2018}
}

@inproceedings{xie2016unsupervised,
  title={Unsupervised deep embedding for clustering analysis},
  author={Xie, Junyuan and Girshick, Ross and Farhadi, Ali},
  booktitle={International conference on machine learning},
  pages={478--487},
  year={2016}
}

@article{rao2019evaluating,
  title={Evaluating protein transfer learning with TAPE},
  author={Rao, Roshan and Bhattacharya, Nicholas and Thomas, Neil and Duan, Yan and Chen, Peter and Canny, John and Abbeel, Pieter and Song, Yun},
  journal={Advances in neural information processing systems},
  volume={32},
  year={2019}
}
\end{document}